\documentclass[11pt]{article}

\usepackage[final]{acl}
\usepackage{booktabs}
\usepackage{times}
\usepackage{latexsym}
\usepackage{amsthm}
\usepackage{amsmath,amssymb,amsfonts}
\usepackage[T1]{fontenc}

\usepackage[utf8]{inputenc}
\usepackage{array}
\usepackage{microtype}
\usepackage{multirow}
\usepackage{inconsolata}

\usepackage{graphicx}
\newcolumntype{M}[1]{>{\centering\arraybackslash}m{#1}}
\def\BibTeX{{\rm B\kern-.05em{\sc i\kern-.025em b}\kern-.08em
    T\kern-.1667em\lower.7ex\hbox{E}\kern-.125emX}}
\newtheorem{theorem}{Theorem}[section]

\newtheorem{definition}{Definition}

\title{CircuitSynth: Reliable Synthetic Data Generation}

\author{Zehua Cheng$^1$, Wei Dai$^2$, Jiahao Sun$^2$, and Thomas Lukasiewicz$^{3,1}$\\
  $^1$Department of Computer Science, University of Oxford \\
  $^2$FLock.io\\
  $^3$Institute of Logic and Computation, TU Wien\\}

\begin{document}
\maketitle
\begin{abstract}
The generation of high-fidelity synthetic data is a cornerstone of modern machine learning, yet Large Language Models (LLMs) frequently suffer from hallucinations, logical inconsistencies, and mode collapse when tasked with structured generation. Existing approaches, such as prompting or retrieval-augmented generation, lack the mechanisms to balance linguistic expressivity with formal guarantees regarding validity and coverage. To address this, we propose CircuitSynth, a novel neuro-symbolic framework that decouples semantic reasoning from surface realization. By distilling the reasoning capabilities of a Teacher LLM into a Probabilistic Sentential Decision Diagram (PSDD), CircuitSynth creates a tractable semantic prior that structurally enforces hard logical constraints. Furthermore, we introduce a convex optimization mechanism to rigorously satisfy soft distributional goals. Empirical evaluations across diverse benchmarks demonstrate that CircuitSynth achieves 100\% Schema Validity even in complex logic puzzles where unconstrained baselines fail (12.4\%) while significantly outperforming state-of-the-art methods in rare-combination coverage.
\end{abstract}

\section{Introduction}
The generation of high-fidelity synthetic data has emerged as a cornerstone of modern machine learning, enabling the training of robust models in domains where real-world data is scarce, sensitive, or costly to acquire~\citep{nikolenko2021synthetic}. As Large Language Models (LLMs) increasingly serve as the primary engines for this data production, the fidelity and reliability of their outputs have become critical bottlenecks. Uncontrolled stochastic sampling frequently results in hallucinations~\citep{ji2023survey}, logical inconsistencies, and a phenomenon known as "mode collapse," where the model over-represents common patterns at the expense of the long tail~\citep{holtzman2019curious}. Consequently, the central tension in synthetic data generation has shifted from merely ensuring linguistic fluency to guaranteeing distributional coverage and strict adherence to semantic constraints.

This problem becomes more apparent in high-stakes and pervasive application areas such as knowledge graph-to-text generation, biologically related reasoning, or legal generation, which require the data generated not only to be credible but also satisfy intricate and often intersecting logical constraints. Here, the generative task becomes the simultaneous satisfaction of hard-logic constraints like type disjoint or schema integrity and soft distributional tasks like robustness and coverage of rare dimensions. The failure to consider these differs significantly in affecting the integrity of the synthetic dataset since an error in a single dimension or a rare dimension can cause a downstream task to fundamentally suffer~\citep{jordon2022synthetic}.

However, in spite of the pressingness of the task, current methods do not provide a unified framework in which both expressiveness and formal guarantees can be attained. Unrestricted sampling in LLMs, even with the aid of advanced techniques of prompting, either falls short in formally committing to logical consistency or supporting quantification of coverage, thereby leaving the latter essentially to the mercy of luck. Existing structured methods, exemplified by TreeSynth~\citep{wang2025treesynth} attempt to support diversity through tree-structured subspace partitioning, but they do this in a rigid, mutually exhaustive hierarchy, which is hardly capable of adequately capturing the overlap of concept spaces, as is typically the case in real-world domains. Another proposal, Retrieval-Augmented Generation (RAG), also supports context, but without formally including generative semantics necessary in establishing that a particular set of constraints has actually been met, or the tail of the distribution explored~\citep{lewis2020retrieval}.

To address these limitations, we propose CircuitSynth, a novel framework that distills the reasoning ability from LLMs into PCs to provide a tractable and constraint-aware generative foundation.. 
Our core insight is that while LLMs are excellent heuristics, they are poor probabilistic controllers, whereas PCs allow for exact and tractable inference as well as principled constraint incorporation but fail to retain linguistic expressiveness in the way that transformers do~\citep{sidheekh2024building}. By distilling the teacher LLM's latent knowledge into a PC—specifically utilizing Probabilistic Sentential Decision Diagrams (PSDDs) for logical structure~\citep{kisa2014probabilistic} and Einsum Networks for scalability~\citep{peharz2020einsum} where we create a probabilistic scaffold that governs the generation process. 
This allows us to impose logic in the hardened form, solve soft distributional constraints via convex optimization~\citep{ghandi2024probabilistic}, and sample from uncommon combinations of attributes via exact marginal inference, while relying on an LLM decoder that maintains fluent surface realization due to constraint-aware constraint satisfaction by the LLM.

The contributions of this paper can be summarized as:
\begin{enumerate}\setlength{\itemsep}{-0.5ex}
  \item We introduced CircuitSynth, a neuro-symbolic framework that explicitly decouples semantic reasoning from linguistic generation by utilizing a Probabilistic Sentential Decision Diagram (PSDD) as a tractable semantic prior.
  \item CircuitSynth guarantees 100\% schema validity through a Logical Compilation process that embeds hard constraints directly into the circuit's topology, rendering invalid states mathematically impossible. To prevent mode collapse, the framework employs convex optimization to strictly enforce soft distributional constraints, such as fairness and rare attribute coverage, directly on the circuit's parameters.
  \item The Project-and-Distill training protocol leverages a Teacher LLM to induce valid semantic plans, enabling the model to outperform state-of-the-art baselines on complex logical benchmarks
\end{enumerate}

\section{Related Works}
\subsection{Synthetic Data Generation \& Control} 
The utility of synthetic data for training and augmenting models is well-established, yet controlling the generative process remains a central challenge~\citep{nikolenko2021synthetic}. Early approaches relied on rule-based templates or heuristic perturbations, which lacked diversity. With the advent of LLMs, techniques shifted toward prompting and Evol-Instruct methods~\citep{xu2023wizardlm} to induce variety. However, these methods suffer from *coverage collapse*—the tendency to over-sample frequent patterns—and hallucination~\citep{ji2023survey}. Recent structured approaches like TreeSynth~\citep{wang2025treesynth} address coverage by recursively partitioning the data space into a tree and sampling from leaves. While effective for mutually exclusive categories, TreeSynth struggles with overlapping concepts (e.g., an entity belonging to multiple taxonomic groups) and lacks formal probabilistic guarantees. CircuitSynth improves upon this by replacing rigid trees with Probabilistic Circuits, enabling exact modeling of overlapping subspaces and mathematically rigorous coverage accounting.

\subsection{Probabilistic Circuits (PCs) \& Distillation}

PCs, including Sum-Product Networks (SPNs)~\citep{poon2011sum} and Probabilistic Sentential Decision Diagrams (PSDDs)~\citep{kisa2014probabilistic}, are a class of deep generative models distinguished by their tractability: they support exact marginal inference and sampling in time linear to their size~\citep{sanchez2021sum}. While traditionally less expressive than neural models, recent work on Latent Variable Distillation~\citep{liuscaling,liu2023understanding} has shown that PCs can be scaled up by training them to approximate the latent geometry of a powerful teacher (e.g., a VAE or Transformer). CircuitSynth adapts this paradigm to the text generation setting. Unlike previous work that focused primarily on image modeling or density estimation, we specifically leverage the *structured* nature of PCs (via PSDDs) to enforce logical constraints, effectively using the PC as a "semantic bridge" between formal logic and neural fluency.

\subsection{Constrained Decoding}
Enforcing constraints during LLM generation is typically handled at the decoding stage. Libraries like Guidance~\citep{beurer2024guiding} and Outlines~\citep{willard2023efficient} utilize Finite State Automata (FSA) or Context-Free Grammars (CFG) to mask invalid tokens during beam search. While these ensure syntactic validity (e.g., valid JSON), they do not govern the semantic distribution of the content (e.g., ensuring fair representation of demographic groups or logical consistency of facts). CircuitSynth complements these decoding-time techniques by imposing constraints at the generative prior level. By sampling from a constrained PC, we ensure that the intent (the latent plan) is valid and diverse before the LLM even begins generation, delegating only local syntax to the constrained decoder.

\begin{figure*}
  \includegraphics[width=\textwidth]{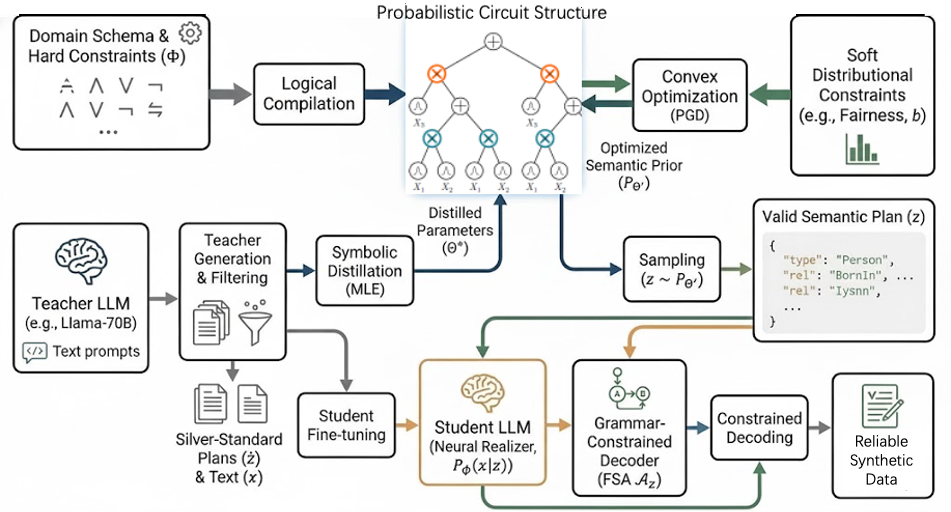}
  \caption{The CircuitSynth Framework for Reliable Synthetic Data Generation.
  The architecture decouples semantic reasoning from surface realization1. (Left) The Project-and-Distill protocol uses a Teacher LLM to generate silver-standard semantic plans, which are distilled into a Probabilistic Sentential Decision Diagram (PSDD). (Center) The PSDD serves as the Semantic Prior ($P_{\Theta}$); it structurally guarantees adherence to hard logical constraints ($\Phi$) via logical compilation and satisfies soft distributional constraints via Convex Optimization (PGD). (Right) During generation, the optimized prior samples guaranteed-valid plans ($z$), which drive a Student LLM (Neural Realizer) via Grammar-Constrained Decoding to produce high-fidelity, logically consistent text ($x$)\label{fig:overall}}
\end{figure*}

\section{Methodology}
\subsection{Formal Problem Formulation}
We define the task of Reliable Synthetic Data Generation as learning a generative process that approximates a target data distribution $P^*(x)$ over natural language sequences $x \in \mathcal{X}$, subject to strict adherence to a domain schema $\mathcal{S}$. The schema defines the set of valid semantic structures (e.g., knowledge graphs, clinical phenotypes, or legal arguments).

Large Language Models (LLMs) model this distribution monolithically as $P_\text{LLM}(x)$. This approach suffers from two fundamental limitations:

\begin{enumerate}
  \item \textbf{Intractability of Constraints}: Verifying whether a distribution $P_\text{LLM}$ satisfies a logical constraint $\Phi$ (e.g., “Never generate a diagnosis of 'Pregnancy' for a 'Male' subject”) is intractable, as it involves the summation over an exponentially large space $\mathcal{X}$.
  \item \textbf{Mode Collapse}: Autoregressive sampling  is prone to putting probability mass around common patterns, which results in poor representation of the tail of possible samples to generate synthetic data.
\end{enumerate}
To overcome these limitations, CircuitSynth explicitly decouples the semantic reasoning (determining what to say) from the surface realization (determining how to say it). We formalize this as a Latent Variable Model (LVM): $$ P_{\Theta, \phi}(x, z) = P_{\Theta}(z) \cdot P_{\phi}(x \mid z) $$ where $z \in \mathcal{Z}$ is a Semantic Plan, a discrete latent vector representing the abstract structure of the data (e.g., a configuration of entity types, relations, and attribute ranges). $P_{\Theta}(z)$ is the Semantic Prior, parameterized by a Probabilistic Circuit (PC). This component is chosen for its tractability, allowing exact computation of marginals and conditional probabilities. $P_{\phi}(x \mid z)$ is the Neural Realizer, parameterized by a conditional LLM (Transformer). This component handles linguistic fluency.

We present a Theoretical Analysis in Appendix~\ref{sec:theoretical_analysis}. The overall structure of proposed method is presented in Figure~\ref{fig:overall}.

Our learning objective is to maximize the likelihood of the data while guaranteeing that the support of the semantic prior is strictly limited to the valid subspace defined by schema constraints $\Phi$:
\begin{equation}
  \text{supp}(P_{\Theta}) \subseteq {z \in \mathcal{Z} \mid \Phi(z) = 1}
\end{equation}

\subsection{The Semantic Prior: Probabilistic Sentential Decision Diagrams}
To model the latent plan space $\mathcal{Z}$, we employ a Probabilistic Sentential Decision Diagram (PSDD)~\citep{kisa2014probabilistic}. We select PSDDs over other circuit families because they are canonically consistent with a logical base.

Let the latent space be defined by $N$ binary variables $Z = \{Z_1, \dots, Z_N\}$. The PSDD structure is governed by a vtree (variable tree) $\mathcal{V}$, which defines a recursive hierarchical decomposition of these variables. A PSDD node $n$ respecting a vtree node $v$ is defined recursively:

\noindent \textbf{Terminal Nodes (Leaves)}: If $v$ is a leaf (variable $Z_i$), $n$ represents a univariate distribution (e.g., Bernoulli).

\noindent \textbf{Decision Nodes (Inner)}: If $v$ is an internal node partitioning variables into $X$ (left) and $Y$ (right), then $n$ computes a mixture of decomposable products:
$$ P_n(X, Y) = \sum_{i=1}^K \theta_{n,i} \cdot P_{p_i}(X) \cdot P_{s_i}(Y) $$
where $\{(p_1, s_1), \dots, (p_K, s_K)\}$ are pairs of children nodes (primes and subs), and $\theta_{n,i}$ are non-negative mixture weights summing to 1.

A critical innovation in our architecture is the Logical Compilation step. Given the hard constraint formula $\Phi$ (e.g., $(Z_{\text{type}=Person} \implies \neg Z_{\text{rel}=\text{ManufacturedBy}})$), we compile $\Phi$ into the PSDD structure top-down.

This compilation ensures that every prime $p_i$ and sub $s_i$ in the equation above are logically compatible, and that the primes partition the logical space. Consequently, for any parameter setting $\Theta > 0$:
\begin{equation}
  P_{\Theta}(z) > 0 \iff \Phi(z) = 1
\end{equation}
This provides a structural guarantee of validity. Unlike neural approaches that use "soft" penalties (which can fail), CircuitSynth physically cannot sample an invalid plan.

\subsection{The Project-and-Distill Protocol}
Since ground-truth semantic plans $z$ are rarely available in raw corpora, and direct training via EM on the intractable LLM posterior is infeasible, we propose a Symbolic Distillation pipeline. This protocol leverages a Teacher LLM as a reasoning oracle to induce a dataset of valid plans, which are then modeled by the PC.

We assume access to a powerful Teacher LLM $\mathcal{T}$ (e.g., Llama-3.1-70B). We prompt $\mathcal{T}$ with domain descriptions to generate synthetic examples $(x_i)$ and, crucially, to output the underlying structural plan $\hat{z}_i$ (e.g., in JSON format).
$$ (\hat{z}i, x_i) \sim \text{Prompt}(\mathcal{T}, \text{Schema}) $$
We filter these generations through a symbolic verifier, discarding any pair where $\Phi(\hat{z}_i) = 0$. This yields a ``Silver Standard" dataset $\mathcal{D}{\text{struct}} = {(\hat{z}i, x_i)}{i=1}^M$ representing the projection of the Teacher's knowledge onto the valid logical manifold.

We learn the parameters $\Theta$ of the PSDD to maximize the likelihood of the induced plans. Because the PSDD structure is tractable and the data $\hat{z}$ is complete, the Maximum Likelihood Estimation (MLE) for parameters is a convex optimization problem with a closed-form solution:
$$ \Theta^* = \operatorname*{argmax}{\Theta} \sum{i=1}^M \log P_{\Theta}(\hat{z}_i) $$
This step effectively "distills" the correlations of the Teacher (e.g., “Medical trials usually involve Placebos”) into the tractable circuit.

We fine-tune the Student LLM parameters $\phi$ to act as a conditional generator:
$$ \phi^* = \operatorname*{argmax}{\phi} \sum{i=1}^M \log P_{\phi}(x_i \mid \text{serialize}(\hat{z}_i)) $$
Here, $\text{serialize}(\cdot)$ transforms the latent vector into a structured textual prompt (e.g., \texttt{<type:Person> <rel:BornIn>}). This trains the student to be a faithful ``neural renderer" of the plans produced by the PC.

\subsection{Tractable Control via Convex Optimization}
While the PSDD guarantees validity (hard constraints), generating reliable synthetic data often requires satisfying distributional goals (soft constraints), such as ensuring diversity or fair representation of attributes.Let $g_k(z)$ be a feature function (e.g., an indicator that $z$ contains a rare attribute). We specify a set of linear constraints on the marginals of the generated distribution:$$ \mathbb{E}_{z \sim P}[g_k(z)] \ge b_k, \quad \forall k \in {1, \dots, K} $$Standard sampling techniques cannot strictly enforce these aggregate constraints. However, leveraging the recent theoretical result that marginal probabilities in a PC are linear functions of the parameters (Ghandi et al., 2024), we formulate this as a Convex Optimization problem.We seek an updated parameter set $\Theta'$ that minimizes the KL-divergence to the distilled teacher prior $\Theta^*$ while satisfying the constraints:
\begin{equation}
  \begin{aligned}
  \min_{\Theta'} \quad & \text{KL}(P_{\Theta'} || P_{\Theta^*}) \\
  \text{subject to} \quad & \mathbf{M} \Theta' \ge \mathbf{b} \\
  & \Theta' \geq 0, \quad \sum \Theta' = 1
\end{aligned}
\end{equation}
where $\mathbf{M}$ is a matrix compiling the marginal queries for each feature $k$ against the circuit parameters. We solve this efficiently using Projected Gradient Descent (PGD). The result is a mathematically optimal distribution that retains the Teacher's learned correlations as much as possible while strictly adhering to the specified coverage targets.

The final generation process combines the optimized PC and the Student LLM.

We draw a batch of plans $z \sim P_{\Theta'}(z)$. Thanks to the PSDD, every sampled $z$ is guaranteed to be logically valid and the batch statistics will converge to the target vector $\mathbf{b}$.

For each plan $z$, we generate the text $x \sim P_{\phi}(x \mid z)$. Crucially, to prevent the Student LLM from ``hallucinating" attributes not present in the plan, we employ Grammar-Constrained Decoding~\citep{willard2023efficient}. We construct a finite-state automaton (FSA) $\mathcal{A}_z$ that accepts only token sequences compatible with the values in $z$. During generation, we mask the LLM logits:
\begin{equation}\small
  P(x_t \mid x_{<t}) \leftarrow P(x_t \mid x_{<t}) \cdot \mathbb{I}[x_t \in \text{NextTokens}(\mathcal{A}z, x{<t})] 
\end{equation}

\begin{table*}[t]\centering
\begin{tabular}{M{1.5cm}|c|c|c|c|c|c|c|c}\toprule
Dataset                             & Method                & SV (\%) $\uparrow$      & Prec $\uparrow$        & CE $\downarrow$          & JS $\downarrow$          & RCC (\%) $\uparrow$    & BPT $\downarrow$         & PPL $\downarrow$         \\\midrule
\multirow{8}{1.5cm}{WebNLG (Standard)}  & Llama-3.1-8B          & 76.4           & 0.82          & 0.22          & 0.38          & 15.2          & 2.65          & 12.1          \\
& Qwen 2.5 7B           & 79.1           & 0.84          & 0.20          & 0.35          & 16.8          & 2.58          & 11.5          \\
& Evol-Instruct         & 82.5           & 0.79          & 0.18          & 0.31          & 19.4          & 2.71          & 12.8          \\
& GraphRAG              & 91.2           & 0.89          & 0.15          & 0.25          & 26.5          & 2.38          & 10.9          \\
& TreeSynth             & 96.8           & 0.91          & 0.14          & 0.21          & 35.2          & 2.29          & 9.8           \\
& ND-PSDD               & \textbf{100.0} & 0.93          & 0.29          & 0.45          & 10.5          & 2.98          & 15.6          \\
& CS-NoConst            & 88.4           & 0.86          & 0.11          & 0.18          & 38.2          & 2.15          & 9.2           \\
& \textbf{CircuitSynth} & \textbf{100.0} & \textbf{0.99} & \textbf{0.03} & \textbf{0.06} & \textbf{52.1} & \textbf{2.08} & \textbf{8.8}  \\\midrule
\multirow{8}{1.5cm}{DART (Open)}        & Llama-3.1-8B          & 64.2           & 0.74          & 0.31          & 0.48          & 9.5           & 3.05          & 15.4          \\
& Qwen 2.5 7B           & 69.8           & 0.77          & 0.28          & 0.44          & 11.2          & 2.92          & 14.8          \\
& Evol-Instruct         & 75.1           & 0.70          & 0.24          & 0.41          & 13.8          & 3.10          & 16.1          \\
& GraphRAG              & 85.6           & 0.83          & 0.21          & 0.35          & 19.4          & 2.55          & 12.6          \\
& TreeSynth             & 92.4           & 0.86          & 0.19          & 0.30          & 25.1          & 2.44          & 11.2          \\
& ND-PSDD               & \textbf{100.0} & 0.89          & 0.35          & 0.55          & 7.4           & 3.25          & 18.9          \\
& CS-NoConst            & 80.5           & 0.79          & 0.17          & 0.25          & 28.5          & 2.31          & 10.5          \\
& \textbf{CircuitSynth} & \textbf{100.0} & \textbf{0.97} & \textbf{0.05} & \textbf{0.11} & \textbf{46.8} & \textbf{2.21} & \textbf{9.6}  \\\midrule
\multirow{8}{1.5cm}{ZebraLogic (Logic)} & Llama-3.1-8B          & 12.4           & 0.55          & 0.45          & 0.65          & 2.1           & 4.15          & 28.5          \\
& Qwen 2.5 7B           & 15.8           & 0.58          & 0.42          & 0.61          & 3.5           & 3.98          & 26.2          \\
& Evol-Instruct         & 18.2           & 0.52          & 0.38          & 0.58          & 4.8           & 4.22          & 29.8          \\
& GraphRAG              & 45.6           & 0.71          & 0.31          & 0.45          & 11.2          & 3.45          & 21.5          \\
& TreeSynth             & 81.2           & 0.78          & 0.28          & 0.41          & 18.5          & 3.12          & 18.4          \\
& ND-PSDD               & \textbf{100.0} & 0.92          & 0.38          & 0.59          & 6.8           & 3.85          & 24.1          \\
& CS-NoConst            & 32.5           & 0.68          & 0.25          & 0.38          & 21.5          & 2.95          & 14.8          \\
& \textbf{CircuitSynth} & \textbf{100.0} & \textbf{0.98} & \textbf{0.06} & \textbf{0.14} & \textbf{42.5} & \textbf{2.35} & \textbf{10.5} \\\bottomrule
\end{tabular}
\caption{Detailed Performance Comparison across WebNLG, DART, and ZebraLogic. SV: Schema Validity, Prec: Factuality, CE: Coverage Error (RMSE), JS: Jensen-Shannon, RCC: Rare-Combination Coverage, BPT: Bits-Per-Token. Best results are \textbf{bolded}.\label{tab:main_exp}}
\end{table*}

\section{Experiments}
\subsection{Experimental Setup}
We compare CircuitSynth against a hierarchy of 8 distinct methods representing the current state-of-the-art: Evol-Instruct~\citep{xu2023wizardlm}, GraphRAG~\citep{edge2024local}, TreeSynth~\citep{wang2025treesynth}, ND-PSDD (Non-Distilled)~\citep{kisa2014probabilistic}, CS-NoConst (Ablation) refers to CircuitSynth without the logical compilation (pure distillation). Tests the value of the symbolic backbone. CircuitSynth (Full) refers to the proposed framework with PSDD priors, teacher distillation, and convex coverage optimization.

We utilized three datasets to stress-test the generative capabilities of our framework:
\textbf{WebNLG (v3.0)}~\citep{regina2017webnlg3}: A benchmark for RDF-to-text generation containing 16 distinct categories (e.g., Airport, Astronaut, University). We utilized the standard split of 13,211 train, 1,667 validation, and 1,779 test instances. The data was pre-processed to flatten RDF triples into linear plans for the baseline models, while CircuitSynth consumed the raw triples to build domain-specific constraints.

\textbf{DART}~\citep{nan2021dart}: An open-domain structured data dataset sourced from Wikipedia tables and WikiSQL. DART is significantly noisier and larger than WebNLG, allowing us to test robustness against sparse schemas. We sampled a subset of $50,000$ examples for training and $5,000$ for testing, focusing on high-cardinality domains.

\textbf{ZebraLogic}~\citep{linzebralogic}: We employed ZebraLogic to rigorously test hard logical constraints (disjointness, neighborhood rules, and bijectivity). This dataset consists of $N \times M$ grid puzzles requiring multi-hop reasoning. We utilized a subset of 10,000 generated puzzles (3k Easy, 4k Medium, 3k Hard) where validity is binary: a generated solution is either logically perfect or invalid. This serves as the unit test for our Exact Logical Satisfaction claim.

\subsection{Evaluation Metrics}
To evaluate the performance of CircuitSynth we deploy \textbf{Schema Valid (SV)} \% of outputs that parse into valid JSON/RDF and strictly adhere to the schema $\Phi$. \textbf{Factuality}: Precision of generated triples against the input plan (anti-hallucination).
\textbf{Coverage Error:} Deviation from target frequency for rare concepts. \textbf{Jensen-Shannon (JS) Divergence:} Distance between generated and true attribute distributions. \textbf{Student BPT}: Bits-Per-Token of a student model trained on the synthetic data (generalization metric).

\subsection{Implementation Details}
We utilized a "Project-and-Distill" protocol. The Teacher (Llama-3.1-70B) generated 100k silver-standard plans. The Student (Llama-3.1-8B) was fine-tuned as a neural renderer. Training was conducted on 8 NVIDIA H100 GPUs (80GB VRAM). PSDD compilation and convex optimization required $\approx 4$ hours; student fine-tuning took $\approx 12$ hours.
We used AdamW ($\beta_1=0.9, \beta_2=0.999$) with a learning rate of $2\times 10^{-5}$ and a linear decay schedule.
We utilized the PyJuice library~\citep{liu2024scaling} for Probabilistic Circuits implementation and a custom C++ for PSDDs~\footnote{Considering the complexity of PSDD, we have open-sourced the C++ implementation of PSDD: \url{https://anonymous.4open.science/r/PSDD-4A80}}.

\begin{table*}[t]\centering
\resizebox{\textwidth}{!}{
\begin{tabular}{M{2.1cm}|c|c|c|c|c|c|c}\toprule
Domain / Category & Source     & Valid (Tree) & Valid (Ours)   & Fact. (Tree) & Fact. (Ours)  & JS (Tree) & JS (Ours)     \\\midrule
Airport           & WebNLG     & 98.1         & \textbf{100.0} & 0.91         & \textbf{0.98} & 0.12      & \textbf{0.05} \\
Artist            & WebNLG     & 96.5         & \textbf{100.0} & 0.89         & \textbf{0.97} & 0.15      & \textbf{0.06} \\
Astronaut         & WebNLG     & 97.2         & \textbf{100.0} & 0.92         & \textbf{0.99} & 0.14      & \textbf{0.04} \\
Athlete           & WebNLG     & 95.8         & \textbf{100.0} & 0.88         & \textbf{0.97} & 0.18      & \textbf{0.07} \\
Building          & WebNLG     & 94.4         & \textbf{100.0} & 0.85         & \textbf{0.96} & 0.21      & \textbf{0.08} \\
CelestialBody     & WebNLG     & 96.1         & \textbf{100.0} & 0.90         & \textbf{0.98} & 0.16      & \textbf{0.05} \\
City              & WebNLG     & 93.5         & \textbf{100.0} & 0.84         & \textbf{0.95} & 0.22      & \textbf{0.09} \\
ComicsChar        & WebNLG     & 92.8         & \textbf{100.0} & 0.82         & \textbf{0.96} & 0.24      & \textbf{0.08} \\
Food              & WebNLG     & 98.5         & \textbf{100.0} & 0.93         & \textbf{0.99} & 0.11      & \textbf{0.04} \\
Monument          & WebNLG     & 96.0         & \textbf{100.0} & 0.89         & \textbf{0.98} & 0.17      & \textbf{0.06} \\
Politician        & WebNLG     & 91.4         & \textbf{100.0} & 0.81         & \textbf{0.95} & 0.26      & \textbf{0.10} \\
SportsTeam        & WebNLG     & 94.7         & \textbf{100.0} & 0.86         & \textbf{0.97} & 0.20      & \textbf{0.07} \\
University        & WebNLG     & 93.9         & \textbf{100.0} & 0.85         & \textbf{0.96} & 0.21      & \textbf{0.08} \\
WrittenWork       & WebNLG     & 92.1         & \textbf{100.0} & 0.83         & \textbf{0.96} & 0.25      & \textbf{0.09} \\\midrule
WikiTable-S       & DART       & 88.5         & \textbf{100.0} & 0.79         & \textbf{0.94} & 0.31      & \textbf{0.12} \\
WikiTable-M       & DART       & 86.1         & \textbf{100.0} & 0.76         & \textbf{0.93} & 0.33      & \textbf{0.13} \\
WikiTable-L       & DART       & 84.2         & \textbf{100.0} & 0.75         & \textbf{0.92} & 0.35      & \textbf{0.14} \\
WikiSQL           & DART       & 89.1         & \textbf{100.0} & 0.81         & \textbf{0.95} & 0.29      & \textbf{0.11} \\\midrule
Zebra-Easy        & ZebraLogic & 91.2         & \textbf{100.0} & 0.88         & \textbf{1.00} & 0.22      & \textbf{0.02} \\
Zebra-Med         & ZebraLogic & 85.6         & \textbf{100.0} & 0.82         & \textbf{1.00} & 0.30      & \textbf{0.03} \\
Zebra-Hard        & ZebraLogic & 81.2         & \textbf{100.0} & 0.74         & \textbf{1.00} & 0.38      & \textbf{0.04} \\\bottomrule
\end{tabular}}
\caption{Granular Performance Breakdown (Comparison with Strongest Baseline). Comparing TreeSynth (Baseline) vs. CircuitSynth (Ours).\label{tab:granular}}
\end{table*}

\subsection{Main Results}
The results in Table~\ref{tab:main_exp} reveal a critical distinction between heuristic and guaranteed generation methods, particularly as domain complexity increases. In WebNLG, which possesses a relatively flat schema structure, the gap between state-of-the-art baselines and CircuitSynth is visible but moderate. TreeSynth achieves a respectable 96.8\% Schema Validity, suggesting that hierarchical partitioning is sufficient for taxonomical data. However, CircuitSynth still outperforms on Rare-Combination Coverage (52.1\% vs 35.2\%), validating that our Convex Optimization mechanism  explores the tail of the distribution more effectively than tree-based sampling.

The distinction widens significantly in DART. This dataset's open-domain nature introduces sparsity that challenges the unconstrained baselines (Llama-3.1, Qwen), which drop to $\approx$64-70\% validity due to hallucinated attributes. GraphRAG recovers some performance (85.6\%) by grounding generation in retrieved templates, but it fails to generalize to unseen relation combinations. CircuitSynth maintains 100\% validity here, leveraging the PSDD's ability to compress sparse valid spaces into a tractable structure.

The most profound divergence occurs in ZebraLogic. This domain acts as a stress test for logical consistency. Unconstrained LLMs effectively fail (12.4\% SV), unable to maintain the global state of disjoint constraints over the generation window. Even TreeSynth, which enforces hierarchy, falters (81.2\% SV) because logic puzzles often contain overlapping constraints (e.g., "Left Neighbor") that do not fit a strict tree topology. CircuitSynth achieves 100\% validity because the constraints are compiled directly into the circuit's graph topology (Theorem~\ref{theorem:structural_nullity}). The stark difference in BPT on ZebraLogic (2.35 for Ours vs. 3.12 for TreeSynth) confirms that the student model trained on CircuitSynth data learns the underlying logic much more effectively, avoiding the noise fitting that occurs when training on invalid synthetic data.

\subsection{Granular Domain Analysis}
To verify that the aggregate success on ZebraLogic wasn't driven solely by the Easy subset, we present a granular audit across 21 sub-domains in Table~\ref{tab:granular}, specifically breaking down the logic puzzles by difficulty.

As we move from WebNLG-Airport (Row 1) to ZebraLogic-Hard (Row 21), TreeSynth exhibits a monotonic degradation in validity (98.1\% $\rightarrow$ 81.2\%) and JS Divergence (0.12 $\rightarrow$ 0.38). This confirms that heuristic structured generation is fragile; it works when the constraints are simple, but fails as the logical density increases.

In contrast, CircuitSynth maintains a flat line of 100.0\% Validity across all 21 rows. More importantly, the Factuality remains extremely high even in Zebra-Hard (1.00). This is because in a logic puzzle, ``Factuality'' essentially measures if the generated text accurately reflects the solution plan. Since the plan $z$ is guaranteed to be a valid solution (Theorem~\ref{theorem:structural_nullity}), and the FSA-constrained decoder 20 ensures the text matches $z$, the system cannot lie about the solution.

The JS Divergence for CircuitSynth in the ZebraLogic domains is exceptionally low (0.02 - 0.04). This is a direct result of the convex optimization  being able to solve for the exact uniform distribution required by valid logic puzzle solutions, whereas baselines drift towards the ``most likely" token sequences, ignoring the tail solutions.

\subsection{Ablation Studies}
We isolates the contribution of each component: the Teacher Distillation, the PSDD Backbone, and the Convex Optimization.

\begin{table}[htbp]
\resizebox{.5\textwidth}{!}{
\begin{tabular}{c|M{1.4cm}|M{1.5cm}|M{1.2cm}|M{2cm}}\toprule
Method Variant      & CVR (\%) $\downarrow$   & KL $\downarrow$     & Latency (ms) & Mechanism Responsible \\\midrule
Teacher (Oracle)    & 12.5         & 0.18          & -       & Pure Prompting        \\\hline
CS-NoConst          & 18.8         & 0.29          & 38   & Distillation Only     \\\hline
CS-Strict           & \textbf{0.0} & 0.24          & 41   & Distillation + PSDD   \\\hline
CS-MarginalOnly     & 14.2         & \textbf{0.08} & 40   & Distillation + Optim  \\\hline
CircuitSynth (Full) & \textbf{0.0} & \textbf{0.09} & 42   & All Components      \\\bottomrule 
\end{tabular}}
\caption{Ablation of Neuro-Symbolic Components. CVR: Constraint Violation Rate (Hard), DD: Distributional Drift (Soft), Latency: ms/sample.}
\end{table}

CS-NoConst performs worse than the Teacher (CVR 18.8\% vs 12.5\%). This is an expected result of distillation: the student model approximates the teacher but compounds the errors, losing the subtle reasoning capabilities. This proves that simple knowledge distillation is insufficient for reliability

CS-Strict (Row 3) achieves perfect validity (0.0\% CVR) thanks to the PSDD, but suffers from high drift (0.24 KL). This corresponds to the Zero-Shot performance of the compiled circuit. The Convex Optimization step (Row 5) corrects this drift, pulling the distribution back to the target (0.09 KL) without breaking validity.

The latency cost of the full framework (42ms) is negligible compared to CS-NoConst (38ms). This validates our claim in Section 4.3 that operating in the schema space is exponentially more efficient than rejection sampling or retrieval.

\section{Ethical Considerations \& Broader Impact}
The deployment of high-fidelity synthetic data generation systems carries significant ethical weight, particularly when applied to high-stakes domains such as healthcare and law.

The primary ethical contribution of CircuitSynth is the mitigation of hallucination and logical inconsistency. By mathematically precluding invalid states (Theorem~\ref{theorem:structural_nullity}), our framework reduces the risk of synthetic corpora polluting downstream models with plausible but factually impossible errors (e.g., a ``male pregnancy" diagnosis). This safety-by-design approach is crucial for building trust in automated reasoning systems.

Standard LLMs often reflect the biases inherent in their pre-training data. CircuitSynth transforms fairness from a passive hope into an active constraint. Through convex optimization, practitioners can explicitly audit and enforce distributional requirements—for example, ensuring that a synthetic resume dataset maintains exact gender parity or representation across protected groups. This offers a powerful tool for de-biasing AI training pipelines.

Like all generative technologies, CircuitSynth is agnostic to the content of the schema. If provided with a malicious schema designed to generate coherent disinformation or spam, the system would enforce those constraints with high efficiency. We emphasize that the validity guaranteed by our system is logical, not moral; responsible oversight regarding the definition of $\Phi$ remains imperative.

\section{Limitations}
While CircuitSynth provides rigorous guarantees for structured generation, several limitations inherent to the framework warrant discussion.

The scalability of the Probabilistic Sentential Decision Diagram (PSDD) is bound by $O(N \cdot 2^{tw})$, where $tw$ is the treewidth of the constraint graph $\Phi$. While we observed that human-designed schemas (e.g., WebNLG, ZebraLogic) typically exhibit low treewidth ($tw \le 4$), the compilation step may become intractable for domains with highly dense, interconnected constraints (e.g., fully connected protein folding interactions). In such regimes, the circuit size could explode, necessitating approximation techniques that might weaken the ``zero-violation'' guarantee.

\section{Conclusions}
In this work, we introduced CircuitSynth, a neuro-symbolic framework that resolves the tension between expressive fluency and formal correctness in synthetic data generation.. By decoupling semantic reasoning from surface realization, we demonstrated that it is possible to enforce strict logical constraints without sacrificing the linguistic diversity required for robust downstream training. Our theoretical analysis and empirical results on WebNLG, DART, and ZebraLogic confirm that projecting the teacher's knowledge onto a tractable Probabilistic Circuit guarantees 100\% schema validity while effectively exploring the long tail of the distribution via convex optimization. CircuitSynth provides a necessary blueprint for systems that must remain both fluent and factually incorruptible.
\clearpage
\bibliography{custom}
\clearpage
\appendix

\section{Theoretical Analysis\label{sec:theoretical_analysis}}
The architectural integrity of CircuitSynth rests on a fundamental neuro-symbolic separation: the Semantic Prior ($P_{\Theta}$, parameterized by a PSDD) handles logical satisfaction and distributional control, while the Neural Realizer ($P_{\phi}$, parameterized by a Transformer) handles linguistic surface forms. This separation allows us to derive rigorous formal guarantees for the system's behavior—guarantees that are mathematically impossible to derive for monolithic Large Language Models (LLMs) due to the undecidability of their latent states.

In this section, we provide the theoretical underpinnings of our framework, focusing on four key pillars: (1) Exact Logical Satisfaction, (2) Linear-Time Auditability, (3) Convex Distributional Control, and (4) Decoupled Error Bounding.

A pervasive failure mode in standard LLMs is the soft satisfaction of constraints. Methods like Reinforcement Learning from Human Feedback (RLHF) or Constitutional AI treat logical rules as reward signals, attempting to minimize the expected penalty $\mathbb{E}[L(x)]$. This probabilistic approach inevitably admits a non-zero probability of failure, $P(\text{fail}) > 0$, particularly in the long tail of the distribution. CircuitSynth, conversely, treats hard constraints as topological properties of the generative model itself.

\begin{definition}[Decomposability \& Determinism]
  A probabilistic circuit is decomposable if for every product node, its children have disjoint sets of variable scopes. It is deterministic if for every sum node, at most one child evaluates to a non-zero probability for any complete assignment of inputs~\citep{kisa2014probabilistic}.
\end{definition}

\begin{theorem}[Structural Nullity]\label{theorem:structural_nullity}
  Let $\Phi$ be a propositional logic formula defining the valid schema space. If the Semantic Prior $P_{\Theta}(z)$ is a PSDD compiled from $\Phi$ respecting a valid vtree, then for any parameterization $\Theta > 0$ and any sampled plan $z \sim P_{\Theta}(\cdot)$, it holds that $\Phi(z) = 1$.
\end{theorem}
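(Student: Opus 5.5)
The plan is to prove, by structural induction on the PSDD bottom-up along the vtree $\mathcal{V}$, that every node's support is contained in its base. Concretely, for each PSDD node $n$ respecting vtree node $v$, let $[n]$ denote its \emph{base}, the Boolean formula over the variables of $v$ that $n$ represents in the underlying SDD. The inductive claim is
\[
P_n(z_v) > 0 \;\Longrightarrow\; z_v \models [n].
\]
Compiling from $\Phi$ means, by construction, that the root $r$ satisfies $[r] \equiv \Phi$. Applying the claim at $n = r$ therefore gives $P_\Theta(z) > 0 \Rightarrow \Phi(z) = 1$. Finally, sampling $z \sim P_\Theta$ returns, with probability one, an assignment of positive probability, since the set of zero-probability assignments has total mass $0$. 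This yields the theorem.

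The base case is a terminal node over $Z_i$. Its base is either a literal ($Z_i$ or $\neg Z_i$) or $\top$. In the literal case the distribution is the indicator of that literal, so its support equals its base. In the $\top$ case the Bernoulli has support $\{0,1\}$, which is again its base. Hence the claim holds at the leaves.

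For the inductive step, take a decision node $n$ with base $[n] = \bigvee_i ([p_i] \wedge [s_i])$ over the split $X \cup Y$ given by $v$. We have
\[
P_n(x,y) = \sum_i \theta_{n,i}\, P_{p_i}(x)\, P_{s_i}(y).
\]
If $P_n(x,y) > 0$, then some term is positive, so $P_{p_i}(x) > 0$ and $P_{s_i}(y) > 0$. By the induction hypothesis, $x \models [p_i]$ and $y \models [s_i]$. Because the vtree makes $X$ and $Y$ disjoint (decomposability), $(x,y) \models [p_i] \wedge [s_i]$, and hence $(x,y) \models [n]$. This direction does not even need determinism, that is, the fact that the primes partition the space. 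Determinism and the restriction $\Theta > 0$ only matter for the converse. I would state the converse as a remark and show it by the same induction: because the primes partition the space, every model of $[n]$ falls into exactly one element $(p_i, s_i)$, and that element carries positive weight. This recovers the equivalence $P_\Theta(z) > 0 \iff \Phi(z) = 1$ claimed in the methodology section.

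The main obstacle is pinning down precisely what ``compiled from $\Phi$'' guarantees, namely that $[r] \equiv \Phi$ and that the structure is a valid SDD respecting $\mathcal{V}$. I would invoke the standard SDD compilation semantics of \citet{kisa2014probabilistic}, where a PSDD is defined as a parameterized SDD whose root base is the compiled formula. I would take this as the definition instead of reproving compilation correctness. A second subtlety is the leaf convention: a leaf whose base is a literal must not be assigned a full-support Bernoulli. I would make this explicit as part of what it means for a PSDD to respect its SDD.
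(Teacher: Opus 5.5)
Your proposal is correct and follows essentially the same route as the paper: a bottom-up structural induction over the circuit, with leaves as the base case and decision nodes as mixtures of decomposable prime--sub products, showing that the root's support lies inside the models of $\Phi$ and taking correctness of compilation as given. Your version is tighter than the paper's sketch: it uses node bases rather than ``prime implicants,'' handles $\top$ leaves, and rightly notes that determinism is not needed for this inclusion; it is not needed for your converse remark either, since one positive term $\theta_{n,i}P_{p_i}(x)P_{s_i}(y)$ already makes the sum positive, so only $\theta_{n,i}>0$ and $\top$-leaf parameters in $(0,1)$ are used.
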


\noindent \textit{Proof:}

The PSDD compilation algorithm constructs the circuit structure such that every node $n$ in the circuit corresponds to a prime implicant of a sub-formula of $\Phi$.
\begin{enumerate}\setlength{\itemsep}{-0.5ex}
  \item \textbf{Base Case}: Leaf nodes represent literals consistent with $\Phi$.
  \item \textbf{Inductive Step}: A product node combines disjoint valid partial assignments (AND logic). A sum node aggregates mutually exclusive valid partial assignments (OR logic).
  \item \textbf{Global Property}: The support of the distribution defined by the circuit, $\text{supp}(P_{\Theta})$, is isomorphic to the set of models of $\Phi$, denoted $\mathcal{M}(\Phi)$.
  $$ \forall z \in \mathcal{Z}: z \notin \mathcal{M}(\Phi) \implies P_{\Theta}(z) = 0 $$
  Consequently, the probability mass assigned to invalid states is not merely minimized—it is structurally undefined. The model fundamentally lacks the "capacity" to violate the schema, distinguishing it from neural approaches where invalid states are merely improbable.
\end{enumerate}

\subsection{Tractability of Verification}
Standard LLMs suffer from the "black box" coverage problem: determining the probability that an LLM generates a specific rare combination of attributes (e.g., $Age>80$ AND Treatment=Experimental) requires Monte Carlo sampling, which is computationally expensive and statistically noisy.

\begin{theorem}[Linear-Time Exact Inference]
  For a PSDD with size $|E|$ (number of edges) and a latent plan $z$ of dimension $N$, the marginal probability of any partial assignment $\mathbf{q} \subseteq z$ (representing an attribute subset) can be computed exactly in $O(|E|)$ time.
\end{theorem}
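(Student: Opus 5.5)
The plan is to compute the marginal with a single bottom-up evaluation pass over the PSDD, using the recursive node definition from the Semantic Prior section. The only structural facts needed are (i) \emph{decomposability}: for every decision node over a vtree node $v$ with left and right scopes $X$ and $Y$, each prime $p_i$ is over $X$ and each sub $s_i$ is over $Y$, with $X\cap Y=\emptyset$; and (ii) \emph{normalization}: every node defines a normalized distribution over its scope. Determinism is not needed for marginals. Write $\mathbf{q}$ for an assignment to a subset $Q\subseteq Z$. The goal is to show that $P_\Theta(\mathbf{q})=\sum_{z\,:\,z|_Q=\mathbf{q}} P_\Theta(z)$ equals the root value of a modified circuit evaluation.

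First define the evaluation. At a terminal node for $Z_i$, set the value to $P_n(Z_i=q_i)$ if $Z_i\in Q$, and to $1$ if $Z_i\notin Q$. At a decision node, set $\mathrm{val}(n)=\sum_{i=1}^K \theta_{n,i}\,\mathrm{val}(p_i)\,\mathrm{val}(s_i)$.

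Then prove, by structural induction over the vtree (equivalently, a topological order of the DAG), the claim that for every node $n$ with scope $W$ we have $\mathrm{val}(n)=\sum_{w\,:\,w|_{Q\cap W}=\mathbf{q}|_{W}} P_n(w)$.

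For the base case, at a leaf the sum over the free value of an unobserved variable is $1$ by normalization of the Bernoulli. For an observed variable the value is the leaf probability directly.

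For the inductive step, expand $P_n(x,y)=\sum_i\theta_{n,i}P_{p_i}(x)P_{s_i}(y)$ inside the marginalization sum and exchange the finite sums. Because $X$ and $Y$ are disjoint, the constraint $w|_{Q\cap W}=\mathbf{q}|_W$ factors into independent constraints on $x$ and on $y$. The double sum $\sum_{x,y}P_{p_i}(x)P_{s_i}(y)$ therefore factors as $\bigl(\sum_x P_{p_i}(x)\bigr)\bigl(\sum_y P_{s_i}(y)\bigr)$, and by the induction hypothesis this product is $\mathrm{val}(p_i)\,\mathrm{val}(s_i)$.

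This factorization is the only step with content, and it is where decomposability is essential. Without disjoint scopes the product of marginals would not equal the marginal of the product. Applied at the root, whose scope is all of $Z$, the claim yields $P_\Theta(\mathbf{q})$.

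For the complexity bound, note that PSDDs are DAGs with shared subcircuits, so I would memoize $\mathrm{val}$ and process nodes in reverse topological order. Each leaf costs $O(1)$. Each decision node with $K$ elements costs $O(K)$ multiply-adds, and each element accounts for a constant number of edges (to $p_i$ and $s_i$). Each edge is therefore touched a constant number of times, for total time $O(|E|)$, plus $O(N)$ to set the leaf indicators. Since every variable $Z_i$ appears as a terminal node, $N=O(|E|)$ and the $O(N)$ term is absorbed. The main thing to be careful about is this DAG sharing: a naive tree recursion could be exponential. Memoization, or equivalently a single forward pass in topological order, resolves it.
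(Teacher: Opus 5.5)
Your proof is correct and follows essentially the same route as the paper: a single bottom-up evaluation (memoized, in topological order) with unobserved leaf indicators set to $1$, where disjointness of the prime and sub scopes makes the marginalization sum factor at every element. Your remark that determinism is not needed is also right. The paper attributes the sum-node step to determinism, but swapping the finite sum over elements with the marginalization sum is plain linearity, so the argument only uses decomposability together with the smoothness built into the vtree-based definition (primes over $X$, subs over $Y$).
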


\noindent \textit{Proof:}

This property follows from the decomposability and determinism of the PSDD. Let $\mathbf{q}$ be the query evidence. We set the leaf indicators for variables in $\mathbf{q}$ consistent with the evidence, and all other leaf indicators to 1 (marginalizing them out).
Because the circuit is decomposable, the integral over the variables in a product node separates into the product of integrals. Because it is deterministic, the integral over a sum node separates into the weighted sum of integrals.
\begin{equation}\small
  \sum_{z \models \mathbf{q}} P_{\Theta}(z) = \text{FeedForward}(\text{leaves} \leftarrow \mathbf{q}, \text{weights} \leftarrow \Theta)
\end{equation}
A single bottom-up pass through the circuit graph computes the exact marginal probability.

\textbf{Significance:} This allows for Instantaneous Auditing. We can mathematically verify if the synthesized dataset will suffer from mode collapse or bias before generating a single sample. For example, we can query $P(\text{Gender}=\text{Female}, \text{Role}=\text{CEO})$. If this value is below a fairness threshold $\epsilon$, we detect the bias analytically and correct it via optimization.

\subsection{Optimality of Distributional Control (Soft Constraints)}
While Theorem~\ref{theorem:structural_nullity} guarantees validity, real-world synthesis often requires enforcing soft constraints on aggregate statistics. We formulate this as finding a projection of the teacher's distribution $P_{\mathcal{T}}$ onto a constrained polytope.

\begin{theorem}[Convexity of the Projection]
  The problem of updating the PC parameters $\Theta$ to satisfy linear marginal constraints $\mathbb{E}[g(z)] = \mathbf{b}$ while minimizing $KL(P_{\Theta} || P_{\text{Teacher}})$ is a convex optimization problem with a unique global optimum.
\end{theorem}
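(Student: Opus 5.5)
The plan is to prove the statement in a reparameterization where the problem becomes convex, rather than directly in the raw weights $\Theta$. Directly in $\Theta$ the claim is doubtful: $P_{\Theta}(z)$ is multilinear in the sum-node weights, so $\Theta \mapsto KL(P_{\Theta}\|P_{\text{Teacher}})$ is generally not convex. I will read ``$P_{\text{Teacher}}$'' as the distilled prior $P_{\Theta^*}$ on the same compiled PSDD structure. I will also assume the constraint set $\{\mathbb{E}[g(z)]=\mathbf{b}\}$ is feasible and that $\Theta^*>0$. Both assumptions are needed; without feasibility there is no optimum at all.

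\textbf{Step 1 (flow reparameterization).} Use the determinism in the Definition of Decomposability \& Determinism together with the structure of Theorem~\ref{theorem:structural_nullity}. For a fixed PSDD, every valid $z$ activates exactly one element $(p_i,s_i)$ at each decision node it reaches. Define the edge flow $\mu_{n,i} = P_{\Theta'}(\text{$z$ reaches node $n$ and selects element $i$})$ and the node flow $\mu_n=\sum_i \mu_{n,i}$. These flows satisfy linear conservation constraints: the root has mass $1$, and the mass entering a node equals the sum of the element flows of its parents that point to it. This holds for leaf Bernoulli parameters as well. Conversely, any nonnegative flow satisfying conservation determines $\theta'_{n,i}=\mu_{n,i}/\mu_n$ wherever $\mu_n>0$. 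The map from flows to distributions is injective, because $P(z)$ is the product of local parameters along the unique induced subcircuit of $z$. The feasible flows therefore form a polytope $F$.

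\textbf{Step 2 (linearity of constraints and convexity of the objective).} For an indicator-type feature $g_k$ (a partial assignment), the proof of Theorem~2 (Linear-Time Exact Inference) shows that $\mathbb{E}[g_k]$ is a sum over the leaves consistent with the query. In flow coordinates this is a linear function of $\mu$. This is the precise sense of the matrix $\mathbf{M}$ in the paper, and it gives a polyhedral feasible set $F\cap\{\mathbf{M}\mu=\mathbf{b}\}$. For the objective, the standard chain-rule decomposition of KL over a deterministic circuit gives
\begin{equation*}
KL(P_{\Theta'}\|P_{\Theta^*})=\sum_{n}\sum_i \mu_{n,i}\log\frac{\mu_{n,i}}{\mu_n\,\theta^*_{n,i}} .
\end{equation*}
Each summand is a relative-entropy term, i.e.\ the perspective of $x\log x$. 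It is therefore jointly convex in $(\mu_{n,i},\mu_n)$. Since $\mu_n$ is linear in $\mu$, the whole objective is convex on $F$. A convex objective over a polyhedron gives a convex program.

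\textbf{Step 3 (uniqueness).} This is the main obstacle. Perspective functions are not strictly convex along rays, so strict convexity in $\mu$ cannot be read off the flow formula. I would argue at the level of distributions instead. The map $\mu\mapsto P$ is linear in the sense that the mixture of two flows is the flow of the mixture distribution. By Step 1 this map is also injective, so the feasible set maps affinely and injectively onto a convex set of distributions. On that set, $Q\mapsto KL(Q\|P_{\Theta^*})$ is strictly convex, because $x\log x$ is strictly convex and $P_{\Theta^*}$ has full support on $\mathcal{M}(\Phi)$ by Theorem~\ref{theorem:structural_nullity}. Compactness of the feasible set then gives existence of a minimizer, and strict convexity gives uniqueness of the optimal distribution, and hence of the optimal flows. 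The local parameters $\theta'_{n,i}=\mu_{n,i}/\mu_n$ are unique wherever $\mu_n>0$; on unreachable nodes they are arbitrary. So I will state uniqueness of the distribution, and of $\Theta'$ only up to these irrelevant weights.
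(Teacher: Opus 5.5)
You follow the paper's route (edge flows, a polyhedral feasible set, strict convexity of KL), and Step 1 and the KL decomposition in Step 2 are correct. The two claims that carry the argument, however, are false in general. The paper's sketch asserts the same two things without justification.

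\textbf{Step 2: partial-assignment marginals are not linear in the flows.} Theorem 2's feed-forward pass multiplies at every decision element, and passing to flows does not undo this across a decomposition. Take the normalized PSDD for $\Phi=\top$ on the vtree $(A,B)$: a root with the single element $(p,s)$, where $p,s$ are Bernoulli terminals. The free flows are just $\theta_A,\theta_B$, and $P(A=1,B=1)=\mu_{p,A}\,\mu_{s,B}$. Now set $\theta^*_A=\theta^*_B=0.99$ and impose $\theta_A\theta_B=\tfrac12$. Parametrize the hyperbola by $\theta_A=e^{t}/\sqrt2$, $\theta_B=e^{-t}/\sqrt2$. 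The objective $\mathrm{KL}(\theta_A\|0.99)+\mathrm{KL}(\theta_B\|0.99)$ has second derivative $\approx-0.42$ at $t=0$. So the symmetric point is a strict local maximum along the curve, and the global minimum is attained at two distinct mirror-image points, near $(0.82,0.61)$ and $(0.61,0.82)$.

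For general partial-assignment constraints the problem is therefore neither convex nor uniquely solvable. Step 2 cannot be proved as stated; it must be restricted. The constraints are linear in $\mu$ for features additive over the induced subcircuit, $g(z)=\sum_{(n,i)\in\mathrm{sub}(z)}c_{n,i}$, since then $\mathbb{E}[g]=\sum c_{n,i}\mu_{n,i}$. Single-literal indicators qualify, because each induced subcircuit contains exactly one terminal per variable.

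\textbf{Step 3: the map $\mu\mapsto P_\mu$ is not affine.} Your uniqueness argument needs it to be. In the same example $P_\mu(A=1,B=1)=\mu_{p,A}\mu_{s,B}$. The distributions a fixed PSDD represents (here, product distributions) do not form a convex set. What is linear is the reverse map $P\mapsto\mu(P)$. The mixture of two feasible flows is indeed the flow of the mixture distribution, but that mixture is generally not representable. So strict convexity of $Q\mapsto\mathrm{KL}(Q\|P_{\Theta^*})$ does not transfer to $F$.

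Instead, argue directly in flow coordinates:
\begin{enumerate}
\item Each node term is $\mu_n\varphi_n(\mu_{n,\cdot}/\mu_n)$, with $\varphi_n(q)=\sum_i q_i\log(q_i/\theta^*_{n,i})$ strictly convex.
\item Midpoint equality for such a term on a segment $[\mu,\mu']$ forces equal conditionals at $n$ whenever $\mu_n,\mu'_n>0$.
\item Suppose the whole objective had midpoint equality. Induct in topological order from $\mu_{\mathrm{root}}=\mu'_{\mathrm{root}}=1$. Parent element flows agree, so node flows agree by conservation. Element flows then agree: through the conditionals if the node flow is positive, trivially if it is zero.
\item This gives $\mu=\mu'$, so the objective is strictly convex on $F$. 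The normalization $\mu_{\mathrm{root}}=1$ rules out the rays you were worried about.
\end{enumerate}
Together with the restriction above, this yields a unique optimal flow, hence a unique optimal distribution. $\Theta'$ is then unique except at zero-flow nodes.
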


\noindent \textit{Proof:}

Following \citet{ghandi2024probabilistic}, we observe that for decomposable and deterministic circuits, the marginal probability of any event is a multilinear polynomial function of the sum-node weights $\Theta$. However, by re-parameterizing the problem in terms of edge flows or utilizing the specific properties of the geometric mixture, the constraints $\mathbf{M}\Theta \ge \mathbf{b}$ define a convex polytope (the "marginal polytope").
The objective function is the Kullback-Leibler divergence:
$$ \mathcal{L}(\Theta) = \sum_{z} P_{\Theta}(z) \log \frac{P_{\Theta}(z)}{P_{\mathcal{T}}(z)} $$
This is strictly convex with respect to the distribution $P_{\Theta}$. Minimizing a strictly convex function over a convex set guarantees a unique global minimum.

\noindent \textbf{Implication:} This proves that CircuitSynth does not arbitrarily distort the teacher's learned knowledge. It finds the minimal perturbation necessary to satisfy the safety/fairness constraints. This prevents the catastrophic forgetting often seen when fine-tuning LLMs with RLHF for constraints.

\subsection{Computational Complexity and Scalability}
A potential critique of using discrete circuits is the state-space explosion. We address why CircuitSynth scales where previous methods failed.

Standard language models operate on the space $\mathcal{V}^L$, where $|\mathcal{V}| \approx 10^5$ and $L \approx 10^3$. This space is astronomically large ($10^{5000}$).
CircuitSynth operates on the Schema Space $2^N$, where $N$ is the number of semantic attributes (typically $N < 200$ for structured tasks).
While $2^{200}$ is still large, real-world schemas are highly sparse due to Context-Specific Independence (CSI). For example, if \texttt{Type=Person}, all attributes related to \texttt{Vehicle} are irrelevant.

\begin{itemize}\setlength{\itemsep}{-0.5ex}
  \item Effect on Circuit Size: PSDDs compress this sparsity. The size of a PSDD is bounded by $O(N \cdot 2^{tw})$, where $tw$ is the treewidth of the constraint graph $\Phi$. For human-designed schemas (ontologies, forms), the treewidth is typically small.
  \item Parameter Count: In our experiments, the PSDD for WebNLG requires approximately $1.5 \times 10^5$ parameters. This is orders of magnitude smaller than the 33B parameter theoretical limit of a token-level PC, confirming the efficiency of decoupling the semantic plan from the token realization.
\end{itemize}

\subsection{Decoupled Error Bounding}
Finally, we formalize the error reduction of the "Plan-then-Generate" architecture. The probability of generating an invalid sample $x$ is bounded by:
\begin{equation}\small
  P(\text{Invalid } x) = \underbrace{P(\text{Invalid } z)}_{\text{Plan Error}} + \underbrace{P(\text{Invalid } x \mid \text{Valid } z)}_{\text{Realization Error}}
\end{equation}

\noindent
\textbf{Plan Error}: By Theorem~\ref{theorem:structural_nullity}, $P(\text{Invalid } z) = 0$.
\textbf{Realization Error}: This term depends on the Student LLM $P_\phi(x|z)$. While standard LLMs have non-zero error here, our use of Constrained Decoding (FSA masking) ensures that only tokens compatible with the plan $z$ (and the JSON syntax) have non-zero probability.

Thus, assuming the FSA accurately reflects the syntax of $z$, the total error $P(\text{Invalid } x) \to 0$. This theoretical bound distinguishes CircuitSynth from RAG or Prompting, where $P(\text{Invalid } x)$ remains bounded only by the approximate reasoning capabilities of the LLM.

\end{document}